\def\BibTeX{{\rm B\kern-.05em{\sc i\kern-.025em b}\kern-.08em
    T\kern-.1667em\lower.7ex\hbox{E}\kern-.125emX}}
\newtheorem{theorem}{Theorem}    
\begin{document}

\title{Adaptive Model Learning of Neural Networks with UUB Stability for Robot Dynamic Estimation
\thanks{This work was published in 2019 International Joint Conference on Neural Networks (IJCNN). More information contact: pagand@sfu.ca}
}

\author{\IEEEauthorblockN{Pedram Agand}
\IEEEauthorblockA{\textit{Advanced Robotics and Automation Systems (ARAS),} \\
\textit{Department of Systems and Control, } \\
\textit{ K. N. Toosi University of Technology,}\\
Tehran, Iran.\\
aagand@email.kntu.ac.ir}
\and
\IEEEauthorblockN{Mahdi Aliyari Shoorehdeli}
\IEEEauthorblockA{\textit{Advanced Process  Automation and Control (APAC),} \\
\textit{ Department of Mechatronics Engineering,}\\
\textit{ K. N. Toosi University
of Technology,}\\
Tehran, Iran. \\
aliyari@kntu.ac.ir}
}

\maketitle

\begin{abstract}
Since batch algorithms suffer from lack of proficiency in confronting model mismatches and disturbances, this contribution proposes  an adaptive scheme based on continuous Lyapunov function for online robot  dynamic identification. This paper suggests stable updating rules to drive neural networks  inspiring from model reference adaptive paradigm. Network structure consists of three parallel self-driving neural networks which aim to estimate robot dynamic terms individually. Lyapunov candidate is selected to construct energy surface for a convex optimization framework. Learning rules are driven directly from Lyapunov functions to make the derivative negative. Finally, experimental results on 3-DOF Phantom Omni Haptic device demonstrate efficiency of the proposed method.
\end{abstract}

\begin{IEEEkeywords}
Adaptive neural network, Lyapanov candidate, robot dynamic, stable updating rule.
\end{IEEEkeywords}

\section{Introduction}
Model-free identification in robotic applications has been extensively discussed in recent literature \cite{cook2017towards}. Since many control architectures in robotics require meticulous model of robot, exigency of efficient identification methodology is inevitable \cite{wu2010overview}. Even by ignoring some high-frequency dynamics and frictions, finding the dynamic of complex parallel robots still somehow difficult. Besides the incontinence along with finding the physical equation, due to inaccuracy of the model, robust approaches should be employed to guarantee the stability of the overall system that is not satisfactory in some delicate applications due to insufficient performance \cite{chen2016wnn}. To overcome this issue, data-driven identification is suggested for improvement of model quality.

 It seems more likely that the breakthrough will come through the use of other more flexible and amenable nonlinear system modeling tools such as the neural network in the form of multilayer perceptron (MLP) and radial Basis function (RBF), fuzzy, Local Linear Model (LLM), ARX, etc \cite{agand2016particle}. Among them, neural network proves to be a powerful yet simple tool for the nonlinear identification problems, that have been used extensively in the areas of filtering, prediction (e.g. \cite{jia2016dynamic}), classification and pattern recognition (e.g. \cite{dolph2017deep}), system modeling (e.g. \cite{suykens2012artificial}) and control (e.g.   \cite{piche2017gain}). Worthiness of  nonlinear system identification, especially in robotics systems is undeniable, since control systems encountered in practice possess the property of linearity only over a certain range of operation \cite{liu2012nonlinear}. Identification in robotics can be considered as two different points of view. In the first perspective, identification are accomplish for calibration of Kinematics (e.g. \cite{nguyen2015calibration}). In the other hand, identification is utilized as a way to render actual dynamic (e.g. \cite{moradi2013neural}) and control applications (e.g. \cite{heusel2017gans,he2016adaptive}). Based on this classification, online and offline identification methodology has shade the world of science. 
 
Stable learning rules were proposed for feedback linearization network in a class of single-input-single-output systems with continuous Lyapunov candidate. The idea of driving adaptive rules directly from continuous time Lyapunov function was presented by \cite{polycarpou1996stable}, where there was no need for the pre-assumption of network construction errors bounds, since the rules are a smooth function of states. In \cite{seng2002Lyapunov}, radial basis function is used as an adaptive filter by constructing discrete time Lyapunov functions.  A robust modification term is added to the updating rules to reinforce identifier against model mismatches and runtime disturbances. New adaptive backpropagation type algorithm is introduced by \cite{man2006new} to eliminate disturbances. It is worth-mentioning to say that the utilized Lyapunov candidate  only includes  networks error  ($V(k)=\sum\beta^ke^2(k)$). In \cite{lim2009Lyapunov}, multilayer neural network is utilized for classification of multi-input-multi-output system. By using Taylor expansion and discrete time Lyapunov candidate, the stability of the learning rules were proven.

In this paper, an adaptive learning rule is adopted for the network structure presented by \cite{agand2016transparent} while preserving UUB stability. By this end, nonlinear activation function in hidden layer are linearized using Taylor expansion. Not only  are stability and convergence of the networks errors targeted in this paper, but also the speed of error convergence is also controlled by adjusting the defined tuning parameters. The set of three independent networks are guaranteed to converge  by defining Weighted Augmentation Error (WAE). By proposing a Lyapunov surface comprising set of augmentation errors and parameter variation, a framework for converging global minimum is established. Updating rules are driven directly from Lyapunov function making its derivative negative. Since neural network  can never fully fit the desired dynamic due to  network construction error, modeling  mismatches and noises, a robust modification approach is utilized to avoid parameter drifts. 

The reminder of this paper is organized as follows. In Sec. II, the problem is declared using mathematical relations. In addition, the structure of neural network to solve it is presented. Sec. III is devoted to learning algorithm and weights adjustment. Some discussion about convergence and stability are outlined in Sec. IV. Experimental results on a 3-DOF serial manipulator, Phantom Omni Haptic device is done in Section V.  Finally, the paper is concluded in Sec. VI.
%%%%%%%%%%%%%%%%%%%%%%%%
\section{Problem Statement}
By a class of Euler-Lagrange equations for robot dynamic we have
\begin{equation}
M(X)\ddot{X}+C(X,\dot{X})\dot{X}+G(X)=\tau=J^T\mathcal{F}
\label{eq:01}
\end{equation}
where, $M, C, G$ are the Inertia, the Coriolis and centrifugal  and the gravity. $\tau, \mathcal{F}$ denotes task-space and workspace forces, respectively. Jacobian of the robot is denoted by $J$.
The final aim is to obtain dynamics terms (namely $M,C,G$) individually by  a set of excitatory inputs including $X,\dot{X},\ddot{X}$ (motion variables, namely position, velocity, and acceleration) using a gray-box identification framework. Three parallel MLP-network is considered  for each term as following:
\begin{equation}
\hat{y}_n(W^h,W^o)=\sum_{j=1}^N\Big(W_{nj}^oF\big(\sum_{i=1}^{P_n}W_{ji}^hx_i+W^h_{j0}\big)\Big)+W^o_{n0}
\label{eq:02}
\end{equation}
where $W^h, W^o$ are the hidden and output layer weights respectively, $n=\{1, 2, \cdots, N\}$, $N$ is  number of robot degrees of freedom, $P_n$  is equal to number of hidden layers  in each network. Inputs for each network ($x_i$) is illustrated in Fig. \ref{fig:str}.
The term $\hat{y}_n$ can be presented in the form of $\hat{M}_n,\hat{C}_n,\hat{G}_n$. Each of these targets construct their own local error as follows:
\begin{equation}
y_n^j-\hat{y}_n^j=e_n^j~~\forall j \in \{M, C, G\} ,~~n\in\{1,2,...,N\}.
\label{eq:03}
\end{equation}
This error can not be calculated in each step, since no target output exist for each subnetwork. Therefore, the error is conveyed to secondary layer as follows:
\begin{equation}
\tau_n-\hat{M}_n\ddot{X}-\hat{C}_n\dot{X}-\hat{G}_n =e_{1n}~~\forall n\in\{1,2,...,N\}.
\label{eq:04}
\end{equation}
The relation presented in Eq. (\ref{eq:04}) is not sufficient to optimize whole networks. Moreover, admission of other criteria are taken into account to confine feasible set of parameters.  By other words, representation of the error is conveyed to the secondary layer which resolve inherent interconnectivity of the structure. Transparency is an intrinsic specific of this structure, since dynamics terms are included explicitly by separate parallel networks.  Relations presented in (\ref{eq:05}) and (\ref{eq:06}) are driven from the skew-symmetric property of $(\dot{M}-2C)$.
\begin{equation}
\dot{\hat{M}}_{nn}-2\hat{C}_{nn}=e_{2n} ~~\forall n\in\{1,2,...,N\}.
\label{eq:05}
\end{equation}
\begin{equation} \begin{split}
\dot{\hat{M}}_{ni}+\dot{\hat{M}}_{in}-&2(\hat{C}_{ni}+\hat{C}_{in})=e_{3m} \\ &\forall i\neq n, m\in\{1,2,...,(N^2-N)/2\}.
\label{eq:06}
\end{split}\end{equation}
Furthermore, according to mass property that states inertia matrix is not rank deficient and should satisfies $\underline{\lambda} I_{n\times n} \leq M(X) \leq \bar{\lambda} I_{n\times n}$ for  $0\leq \underline{\lambda} \leq \bar{\lambda}$, another error will be defined as follows:
\begin{equation}
|\hat{M}-\lambda~I_{n\times n}|=e_{4n}, ~ \lambda=eig(\hat{M})~e^{\lambda_0/t}~~\forall n\in\{1,2,...,N\}
\label{eq:07}
\end{equation}
where $|.|$ denotes matrix determinate. To drive the updating rules, a multi-objective optimization problem with prescribed equality conditions is considered as
\begin{equation}
Min \{\mathbf{J}\}=e_1^Te_1,~~ s.t ~e_2=e_3=e_4=0
\label{eq:091}
\end{equation}
where,
\begin{equation}
 e^t_j(k)=[e_{j1}^t(k), e_{j2}^t(k),...,e_{jn}^t(k)]^T, ~\forall j\in\{1,2,3,4\}
\label{eq:08}
\end{equation}
in which the superscript $(t)$ shows the order of data and $(k)$ depicts epoch's number.
The cost function in (\ref{eq:091}) can be optimized by adding Lagrangian weights as
\begin{equation}
\mathbf{H}=e_1^Te_1+\lambda_1e_2^Te_2+\lambda_2e_3^Te_3+ \lambda_3e_4^Te_4.
\label{eq:092}
\end{equation}
Therefore encapsulated error in a form of WAE is constructed as follows representing a target solution:
\begin{equation}
\varepsilon^t(k)=[e_1^T (k),\lambda_1e_2^T (k),\lambda_2e_3^T (k),\lambda_1e_4^T (k)]^T, \lambda_i>1.
\label{eq:09r}
\end{equation}
 %%%%%%%%%%%%%%%%%%%%%%%%
\section{Learning Algorithm}
In this section, an adaptive learning rule is driven with Lyapunov stability rules. 
The trajectory of a dynamic system with an equilibrium point at origin is said to be uniformly ultimated bounded (UUB stable) with respect to $S$ as any Lyapunov level surface of $V$ (system Lyapunov candidate), if the derivative of $V$ is strictly negative outside of $S$. Hence, all trajectory outside $S$ must be converged toward it. However, the asymptotic convergence of the trajectory can not be inferred.
\begin{theorem}
If the updating rules of the prescribed network in  (\ref{eq:02}) are considered as
\begin{equation}
\dot{W}_i^h =-\gamma_{i1} \varepsilon \xi_i; ~~ \dot{W}_i^o=-\gamma_{i2} \varepsilon \zeta_i ~~ \forall i\in\{M ,C, G\}
\label{eq:30}
\end{equation}
where $\varepsilon$ is obtained from encapsulated error in (\ref{eq:09r}) and,
\begin{equation}
\zeta_i=\frac{\partial \hat{\tau}}{\partial W_i^o} ; ~~\xi_i=W_i^o \frac{\partial \zeta_i}{\partial W_i^h} ; i \in\{M, C, G\}
\label{eq:25}
\end{equation}
then the system has UUB stability in confined subset
\begin{equation}
\mathcal{S}:\{x| \|x-\hat{x}\| \leq \gamma \nu_0 \}
\label{eq:251}
\end{equation}
where $ \gamma>1$ is a tuning parameter which controls the stability margin of the system and $\nu_0$ is the minimum inevitable error in a network that can be confined by
\begin{equation}
\nu_0 =\sup_{x\in B} |f_0-\nu| <\infty
\label{eq:2521}
\end{equation}
where $f_0$ is Taylor expansion error and $\nu$ denotes the network reconstruction error.
\end{theorem}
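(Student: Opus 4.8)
The plan is to follow the continuous-time Lyapunov synthesis of \cite{polycarpou1996stable}, adapted to the encapsulated multi-network error structure of Sec.~II. First I would introduce the parameter deviations $\tilde{W}_i^h = W_i^h - W_i^{h*}$ and $\tilde{W}_i^o = W_i^o - W_i^{o*}$, where the starred weights denote the ideal (constant) parameters realizing the best attainable approximation of the true dynamics over the compact operating set $B$. I would then adopt, as the energy surface promised in the introduction, the candidate
\begin{equation}
V = \tfrac{1}{2}\varepsilon^T\varepsilon + \tfrac{1}{2}\sum_{i\in\{M,C,G\}}\Big(\tfrac{1}{\gamma_{i1}}(\tilde{W}_i^h)^T\tilde{W}_i^h + \tfrac{1}{\gamma_{i2}}(\tilde{W}_i^o)^T\tilde{W}_i^o\Big),
\end{equation}
which is positive definite and radially unbounded in the combined error/parameter coordinates, so that any sub-level set is a valid Lyapunov level surface in the sense used to define UUB stability above.

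Next I would differentiate $V$ along the closed-loop trajectories. Since the ideal weights are constant, $\dot{\tilde{W}}_i=\dot{W}_i$, so the parameter contribution collapses to $\sum_i\big(\gamma_{i1}^{-1}(\tilde{W}_i^h)^T\dot{W}_i^h+\gamma_{i2}^{-1}(\tilde{W}_i^o)^T\dot{W}_i^o\big)$. The crux is to express the dominant (torque) component of $\varepsilon$ in terms of these deviations. Linearizing the hidden-layer activation $F$ via Taylor expansion about the ideal weights yields $\hat{\tau}(W)-\hat{\tau}(W^*)=\sum_i\big(\zeta_i^T\tilde{W}_i^o+\xi_i^T\tilde{W}_i^h\big)+f_0$, where the sensitivities $\zeta_i,\xi_i$ are exactly those defined in (\ref{eq:25}) and $f_0$ collects the higher-order remainder. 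Writing the true torque as $\tau=\hat{\tau}(W^*)+\nu$ with $\nu$ the network reconstruction error then gives the algebraic relation $\sum_i(\zeta_i^T\tilde{W}_i^o+\xi_i^T\tilde{W}_i^h)=(\nu-f_0)-\varepsilon$, up to the Lagrangian weights $\lambda_i$ carried in the WAE (\ref{eq:09r}).

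Substituting the updating rules (\ref{eq:30}), which constitute gradient descent on the encapsulated cost, the adaptation gains $\gamma_{i1},\gamma_{i2}$ cancel and the cross terms combine with the relation just derived to give $\dot{V}=-\|\varepsilon\|^2+\varepsilon^T(\nu-f_0)$. Invoking the uniform bound $\nu_0=\sup_{x\in B}|f_0-\nu|$ from (\ref{eq:2521}) and the Cauchy--Schwarz inequality yields $\dot{V}\le-\|\varepsilon\|\big(\|\varepsilon\|-\nu_0\big)$, so $\dot{V}$ is strictly negative whenever $\|\varepsilon\|>\gamma\nu_0$ for the margin parameter $\gamma>1$. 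Hence every trajectory starting outside the level set $\mathcal{S}$ of (\ref{eq:251}) is driven toward it and thereafter remains bounded, which is precisely the UUB property; a robust modification (a leakage term of $\sigma$- or $e$-modification type) would be appended to (\ref{eq:30}) to prevent parameter drift while the trajectory resides inside $\mathcal{S}$, where $\dot V$ need not be sign definite.

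The step I expect to be the main obstacle is the Taylor linearization together with the cancellation bookkeeping. I must verify that the first-order sensitivities of $\hat{\tau}$ with respect to both weight layers reproduce exactly the quantities $\zeta_i=\partial\hat{\tau}/\partial W_i^o$ and $\xi_i=W_i^o\,\partial\zeta_i/\partial W_i^h$, so that the cross terms in $\dot{V}$ collapse cleanly rather than leaving spurious residuals, and I must confirm that the combined reconstruction-plus-remainder error admits the uniform bound $\nu_0<\infty$ over $B$. The latter rests on the smoothness of $F$ and the universal-approximation property guaranteeing a finite best-approximation residual $\nu$, while propagating the weights $\lambda_i$ of (\ref{eq:09r}) consistently through the cancellation (so that the three subnetworks and the skew-symmetry and mass-property constraints all contribute with the correct sign) is where most of the care will be required.
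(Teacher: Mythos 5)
Your Lyapunov candidate places $\tfrac12\varepsilon^T\varepsilon$ directly inside $V$, but your computation of $\dot V$ never accounts for the resulting term $\varepsilon^T\dot\varepsilon$: the expression $\dot V=-\|\varepsilon\|^2+\varepsilon^T(\nu-f_0)$ you arrive at is exactly (and only) the parameter contribution $\sum_i\bigl(\gamma_{i1}^{-1}(\tilde W_i^h)^T\dot W_i^h+\gamma_{i2}^{-1}(\tilde W_i^o)^T\dot W_i^o\bigr)$ after substituting the update law and the algebraic Taylor relation. The dropped term is not negligible and cannot be made to cancel: $\varepsilon$ depends not only on the weights but also on the time-varying regressors ($X,\dot X,\ddot X$ enter $\zeta_i$, $\xi_i$, $f_0$ and $\nu$), so $\dot\varepsilon$ contains uncontrolled input-derivative terms that the update law (\ref{eq:30}) does nothing about. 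Conversely, if you delete $\tfrac12\|\varepsilon\|^2$ from $V$ to avoid this, then $V$ no longer measures the error at all, and $\dot V<0$ outside a set no longer yields the claimed UUB bound on the error; at best you recover the usual ``small-in-the-mean'' adaptive-control conclusion. (There is also a sign inconsistency: with your convention $\sum_i(\zeta_i^T\tilde W_i^o+\xi_i^T\tilde W_i^h)=(\nu-f_0)-\varepsilon$, substituting the minus-sign update law produces $+\|\varepsilon\|^2$, not $-\|\varepsilon\|^2$; the negative sign requires the opposite error convention, which is the one the paper implicitly uses.)

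This obstruction is precisely what the paper's proof is engineered around, and it is the one idea your proposal is missing: rather than putting $\varepsilon$ into $V$, the paper introduces a modified error $e$ through the stable first-order filter $\alpha e(t)+\dot e(t)=\varepsilon(t)$ in Eq. (\ref{eq:21}). Because $e$ has \emph{known} dynamics by construction, $e^T\dot e=-\alpha\|e\|_2^2+e^T\varepsilon$ is exactly computable; the $e^T\varepsilon$ piece is expanded via the same Taylor relation you derived and cancelled against the trace terms by the update law, leaving $\dot V\le e^T\nu_0-\alpha\|e\|_2^2$. The robust modification is then invoked not merely to stop drift but as a quantitative condition $|e(t)|\ge\gamma\nu_0/\alpha$, which turns the residual term into $\tfrac{\alpha}{\gamma}e^Te$ and gives $\dot V\le-\alpha(1-\tfrac1\gamma)\|e\|_2^2<0$ outside the UUB set. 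This also explains why the design parameter $\alpha$ — which controls convergence speed and enters the paper's stability margin, and is central to its Discussion section — is entirely absent from your argument. Your skeleton otherwise matches the paper (ideal weights, Taylor linearization producing $\zeta_i,\xi_i$, the bound $\nu_0$, a $\sigma$-modification-type safeguard), but without the filtered-error device the decisive step of the derivation fails.
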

\begin{proof}
First the following dynamic mapping between the networks error in Eq. (\ref{eq:09r}) ($\varepsilon$) and the modified error in Lyapunov candidate (e) is considered:
\begin{equation}
\alpha e(t)+\frac{d e(t)}{dt}=\varepsilon(t),
\label{eq:21}
\end{equation}
where $\alpha>0$ is a designing parameter that controls speed of error convergence. The steady-state response of the modified error ($e$) is equal to the proportion of the WAE ($\varepsilon$). The larger value that  $\alpha$  takes, the response time and the steady-state error is reduced, while more chattering appear in the updating weights. Thereby, one should  compromise for choosing this parameter. Now consider the Lyapunov candidate as follows that include whole set of variables in system:
\begin{equation}
V=\frac{1}{2} \Big(\|e\|_2^2 +\Gamma^{-1} \|\tilde{\theta}\|_F^2\Big),
\label{eq:22}
\end{equation}
where $\|.\|_F$ is the Frobenius norm and
\begin{equation}
\theta:\Big\{W_M^o,W_M^h,W_C^o,W_C^h,W_G^o,W_G^h\Big\}, 
\label{eq:23}
\end{equation}
\begin{equation}
\tilde{\theta}=\theta-\theta^*, \theta^*=\min_{\theta\in B} \Big(\sup_{x\in X} \varepsilon(t) \Big).
\label{eq:231}
\end{equation}
The forward dynamic error is constructed as following:
\begin{equation}\begin{split}
\tau=W_M^{o*}f_M(W_M^{h*}X)\ddot{X}+W_C^{o*}f_C\Big(W_C^{h*} \begin{bmatrix}
X\\ \dot{X}
\end{bmatrix}
\Big)\dot{X}+\\W_G^{o*}f_G(W_G^{h*}X)+\nu
\label{eq:23}
\end{split}\end{equation}
where $\nu$ is the network reconstruction error which was mentioned previously in (\ref{eq:2521}). This error can be reduced by increasing number of neurons or changing network construction. The estimated torque is driven as
\begin{equation}\begin{split}
\hat{\tau}=W_M^{o}f_M(W_M^{h}X)\ddot{X}+W_C^{o}f_C\Big(W_C^{h} \begin{bmatrix}
X\\ \dot{X}
\end{bmatrix}
\Big)\dot{X}+\\W_G^{o}f_G(W_G^{h}X).
\label{eq:24}
\end{split}\end{equation}
Taylor expansion on Eq. (\ref{eq:24}) will results:
\begin{equation}
\hat{\tau}=\widetilde{W}_M^o \zeta_M+\widetilde{W}_M^h \xi_M +\widetilde{W}_C^o \zeta_C+\widetilde{W}_C^h \xi_C +\widetilde{W}_G^o \zeta_G+\widetilde{W}_G^h \xi_G +f_0,
\label{eq:241}
\end{equation}
where $\widetilde{W}_i^j$ are the difference between current and initial weights and
\begin{equation}
 f_0=f_{0M}+f_{0C}+f_{0G}
 \label{eq:251}
\end{equation}
\begin{equation}\begin{split}
f_{0M}=W_M^of_M(W_M^hX)\ddot{X}-W_M^{o*}f_M(W_M^{h*}X)\ddot{X}-\\\widetilde{W}_M^o\zeta_M-
\widetilde{W}_M^h\xi_M.
\label{eq:262}
\end{split} \end{equation}
Eq. (\ref{eq:262}) holds for $C, G$ similarly. Derivative of Lyapunov function in Eq. (\ref{eq:22}) will be
\begin{equation}
\dot{V}=e^T\dot{e}+\Gamma^{-1} tr\{\dot{\theta} \tilde{\theta}^T\}
\label{eq:27}
\end{equation}
by using Eq. (\ref{eq:21})
\begin{equation}
\dot{V}=- \alpha \|e\|_2^2+e^T \varepsilon +\Gamma^{-1} tr\{\dot{\theta} \tilde{\theta}^T\}
\label{eq:272}
\end{equation}
\begin{equation} \begin{split}
\dot{V}=e^T\Big(\sum_{i=1}^3(\widetilde{W}_i^h\xi_i+\widetilde{W}_i^o\zeta_i) +f_0-\nu\Big)+&\\\sum_{i=1}^3 \Big(\frac{1}{\gamma_{i1}} tr\{\dot{W}_i^h \tilde{W}_i^h\}+\frac{1}{\gamma_{i2}}tr\{\dot{W}_i^o \tilde{W}_i^o\}\Big)& - \alpha \|e\|_2^2
\label{eq:28}
\end{split}\end{equation}
where $i=\{1,2,3\}$ assign to $\{M,C,G\}$ respectively. Moreover we have
\begin{equation}\begin{split}
\dot{V}=e^T(f_0-\nu)+\sum_{i=1}^3 \Big( \frac{1}{\gamma_{i1}} tr\{\dot{W}_i^h \tilde{W}_i^h+\gamma_{i1} \varepsilon \tilde{W}_i^h \xi_i\}\Big)+\\\sum_{i=1}^3 \Big( \frac{1}{\gamma_{i2} }tr\{\dot{W}_i^o \tilde{W}_i^o+\gamma_{i2} \varepsilon \tilde{W}_i^o \zeta_i\}\Big) -\alpha \|e\|_2^2
\label{eq:29}
 \end{split}\end{equation}
By using updating rules in (\ref{eq:30})  and (\ref{eq:2521}) we have
\begin{equation}
\dot{V}\leq e^T\nu_0-\alpha \|e\|_2^2
\label{eq:291}
\end{equation}
To guarantee stability in the presence of uncertainty and modeling error, robust control approaches are considered as margin for stability.  By this end, the prescribed error must not enter the ball of a definite radius corisponding to inevitable error in (\ref{eq:2521}). Consequently, due to uncertainty presented in system, the goal is not to vanish error, but is to make it greater than a value. This consideration is formulated as follows:
\begin{equation} 
|e(t)|\geq \gamma \frac{v_0}{\alpha}, \gamma >1
\label{eq:31}
\end{equation}
Implementation of this condition has been proposed widely in the literature known as $\sigma$-modification \cite{hovakimyan2002adaptive}, projection method, $\delta$-rule  dead zone  and other robust approaches \cite{ioannou2012robust}. These methods change the direction of movement in the case that leads to escape of parameter from prescribed subsets ($B$). These methods are utilized to confront with the bursting phenomenon in the adaptive systems by adding robust modification term.
  Derivative of Lyapunov function will be simplified as:
\begin{equation}
\dot{V}\leq -\alpha \|e\|_2^2+\frac{\alpha}{\gamma} e^T e.
\label{eq:32}
\end{equation}
Using Eq. (\ref{eq:31}) in (\ref{eq:32}) will results in
\begin{equation}
\dot{V}\leq -\alpha (1-\frac{1}{\gamma})\|e\|_2^2 <0.
\label{eq:33}
\end{equation}
Last relation proves the convergence of the algorithm to a limited subset, by the other word, UUB stability guarantees the boundedness on the weights and network error. Meanwhile, all the variables belong to the $\mathcal{L}_\infty$ space. 
\end{proof}

\section{Discussion}
 Since the regressors are reproduced from  output in a simulation-error-based identification which are in  ware of divergence for long time running.  According to Eqs. (\ref{eq:04}) to (\ref{eq:07})  four linear independent set of errors are defined. The proof of convergence for each subnetwork to their local values is investigated in our previous work in \cite{agand2017Adaptive}.

In Fig. \ref{fig:into}, a representation of the proposed algorithm with the robust modification is shown for identification of pole place in a first order dynamic system. In this figure, the continuous black line depict the network error in (\ref{eq:09r}), while the red line illustrate the trend of modified error in (\ref{eq:21}). On the other hand, the dashes line is a wise proportion of the network error, in a way to maintain the modified error in a valid zone.  Region I is the impermissible area, since the identifier is over-parametrized. Region II, is the stability margin of the identifier, which is controlled by parameter $\gamma$. Region III, is the operation area and declare the UUB subset. Region VI, is the low performance and unstable area. It is concluded that  increasing parameter $\alpha$ will enhance the speed convergence. However, the system stability is jeopardized, if $\gamma$ is not well chosen. According to the discussion, to restrict the error in region III, one should chose the tuning parameter satisfying $1<\gamma\leq\alpha$.

\begin{figure}[t!]
  \centering
  \begin{minipage}[b]{0.4\textwidth}
\includegraphics[width=\linewidth,trim={0cm  0cm 8cm 1cm },clip]{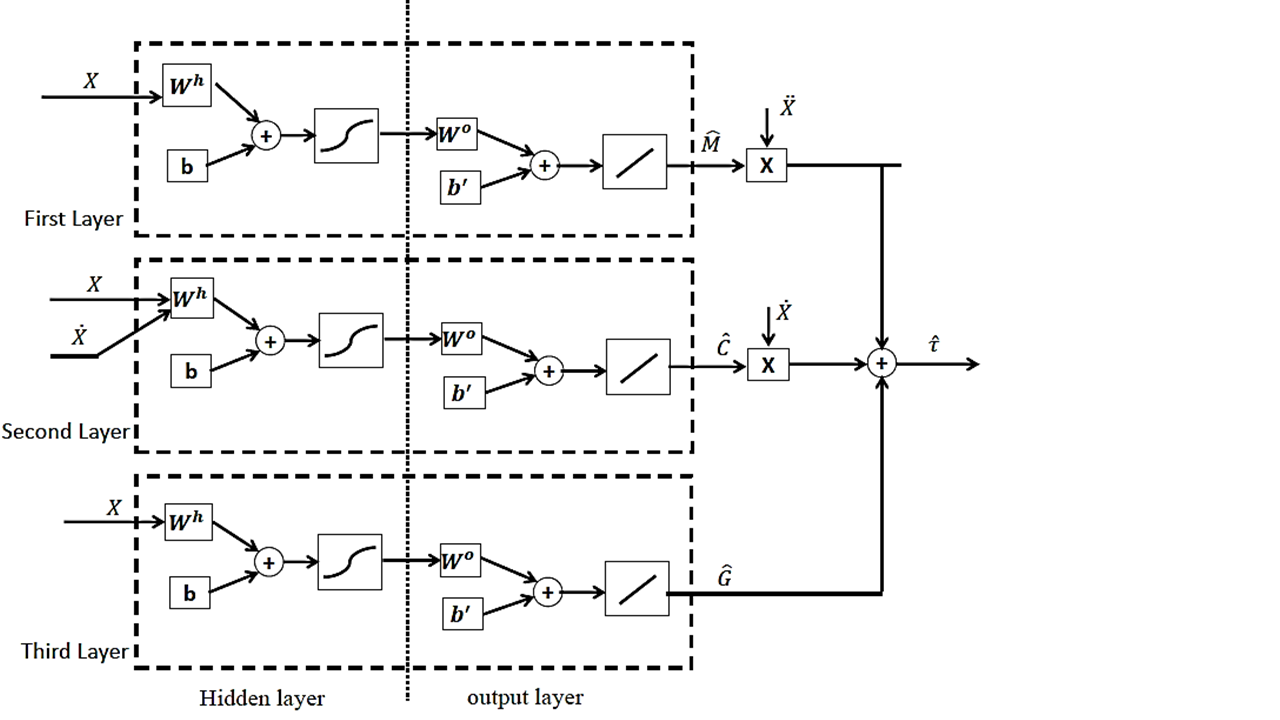}
  \caption{Proposed neural network structure.}
  \label{fig:str}
  \end{minipage}
  \hfill
  \begin{minipage}[b]{0.35\textwidth}
 \includegraphics[width=\linewidth,trim={6cm  1cm 5cm 2cm },clip]{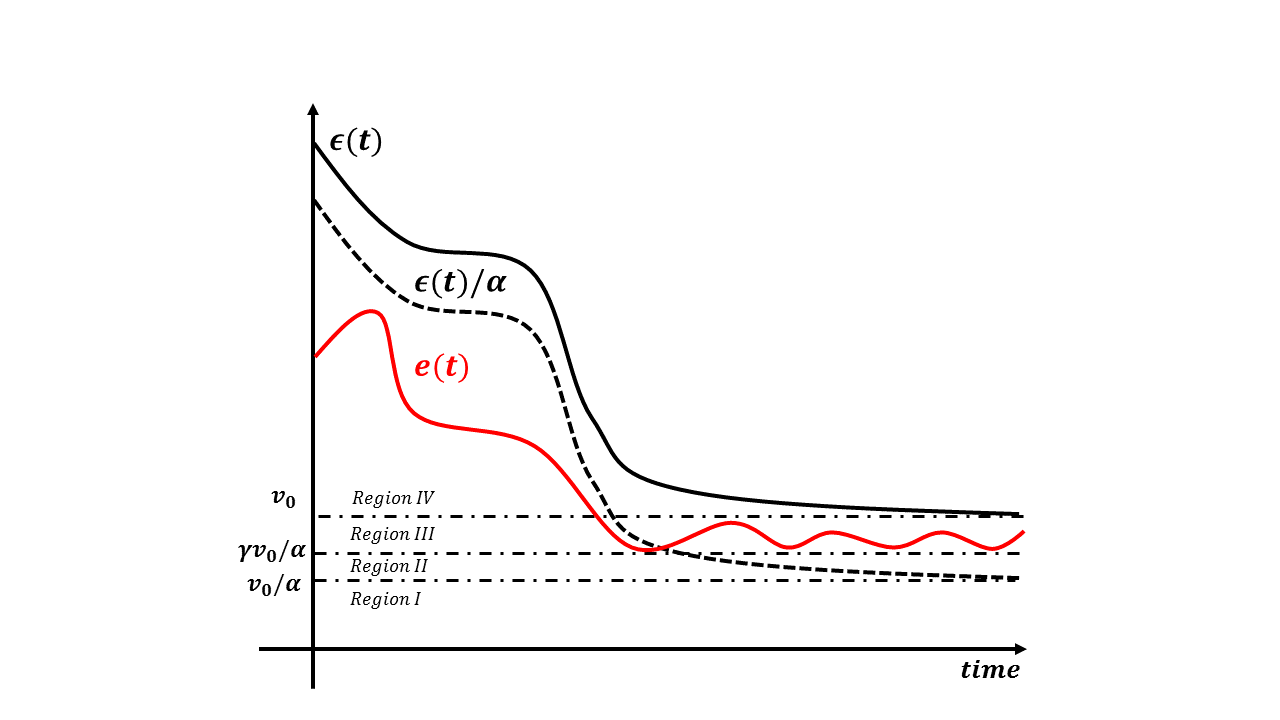}
  \caption{Effect of tuning parameters, $\alpha, \gamma$.}
  \label{fig:into}
  \end{minipage}
\end{figure}

\section{Experiments}
\subsection{Setup}
For the experimental, a 3-DOF Phantom Omni Haptic device, as serial manipulator designed for robotic application is used for testbed. The setup is shown in Fig. \ref{fig:phantom}. Since all regressors of network are not available (namely velocity and acceleration of joint variables), Kalman filter is used to smooth and predict input variables. The identification process is done in the close loop using a simple proportional controller not to deteriorate special frequency information. Data are extracted realtime in external mode Matlab S-function and 1 msec sampling rate. IEEE-1394a compliant six pin to six pin FireWire cable is used to connect the Phantom Omni device to the PC. Forward dynamic identification of the proposed network is shown in Fig. \ref{fig:tr1}. As it can be seen in this figure, the measured and estimated torque differs in temporarily response. The reason for that is two folded. Firstly, the neural network is updating it self and little by little is identifying the system. In addition, the modified error in the backend is performed as a  stable dynamic. After a period that the modified error (e) allocated in the third region (operation area), the network error is reached its minimum value and the tracking error mitigates but not vanished. For the sake of illustration, the effect of parameter $\alpha$ is checked in the temporary response of convergence. The impact of $\alpha$ on tracking error and weights adjustment are shown in Fig. \ref{fig:alpha1}, \ref{fig:weight}, respectively.  It is worth-mentioning that increasing the value of $\alpha$ will speedup the convergence of error. Moreover, the steady-estate error is reduced. However, as Fig. \ref{fig:weight} illuminate, the profile of weights adjustment is more outspread if the value of $\alpha$ increases.

The tuning parameters has direct impact on both error convergence and speed of the convergence. Consider the case, where $\alpha$ tends to infinity, where the derivative term in Eq. (\ref{eq:21}) is simply neglected, and apparently both speed of convergence and the amount of error are decimated. The proposed structure become a conventional back propagation strategy with gradient decent method which can be considered as a baseline method for comparison. However, in that case, the chartering effect arouses and deteriorate the overall performance of the system. In addition, due to insufficiency of model validity in the gray box identifier, beside the complexity of the convex optimization problem, stability and convergence of the NN jeopardizes grossly. Hence, to show superiority of the proposed strategy, the results are compared with a non-adaptive NN structure in the next section.

\begin{figure}[t!]
  \centering
\includegraphics[width=\linewidth,trim={0cm  0cm 0cm 0cm },clip]{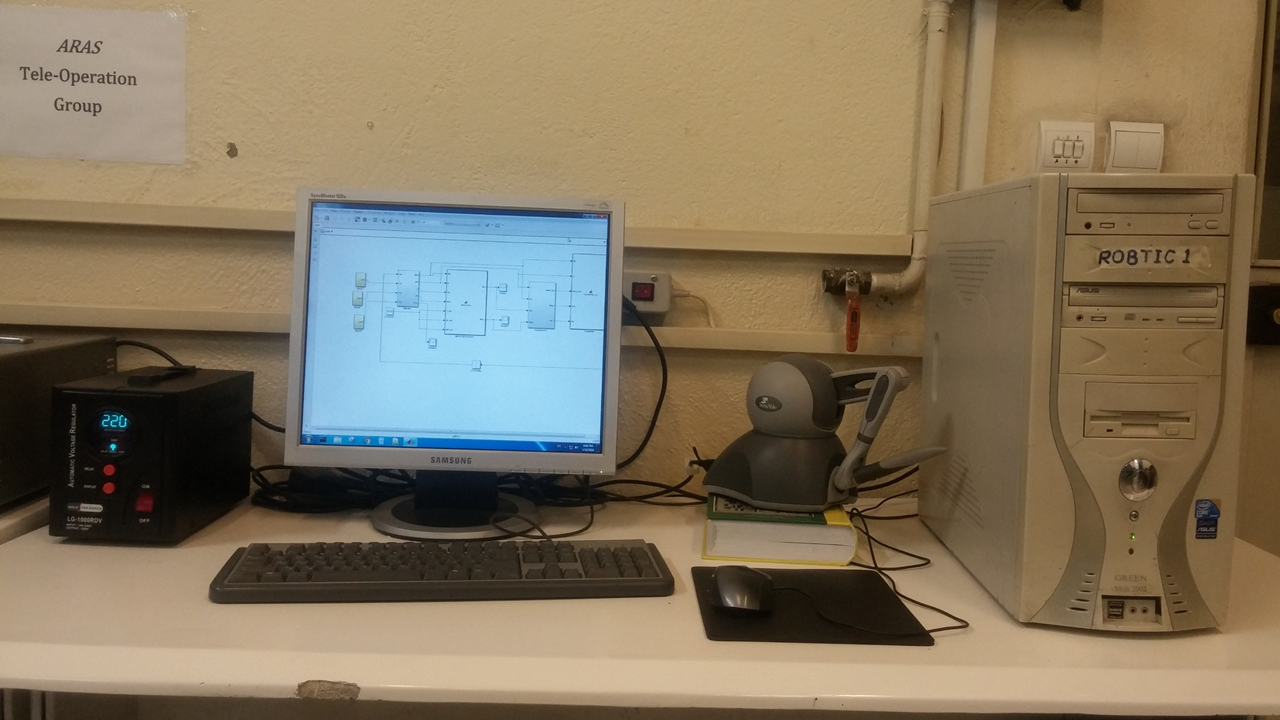}
  \caption{Phantom Omni, Sensable: Haptic Device.}
  \label{fig:phantom}
\end{figure}

\begin{figure}[t!]
  \centering
\includegraphics[width=\linewidth,trim={1cm  0.5cm 1.5cm 0.5cm },clip]{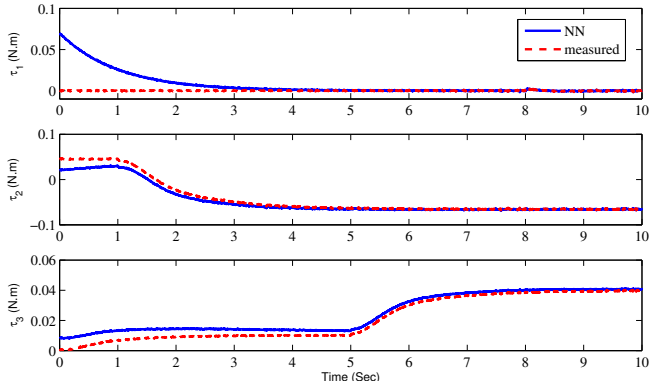}
  \caption{3-DOF forward dynamic tracking: measured and NN estimation.}
  \label{fig:tr1}
\end{figure}

\begin{figure}[t!]
  \centering
  \begin{minipage}[b]{0.48\textwidth}
 \includegraphics[width=\linewidth,trim={1cm  0cm 1.8cm 0.3cm },clip]{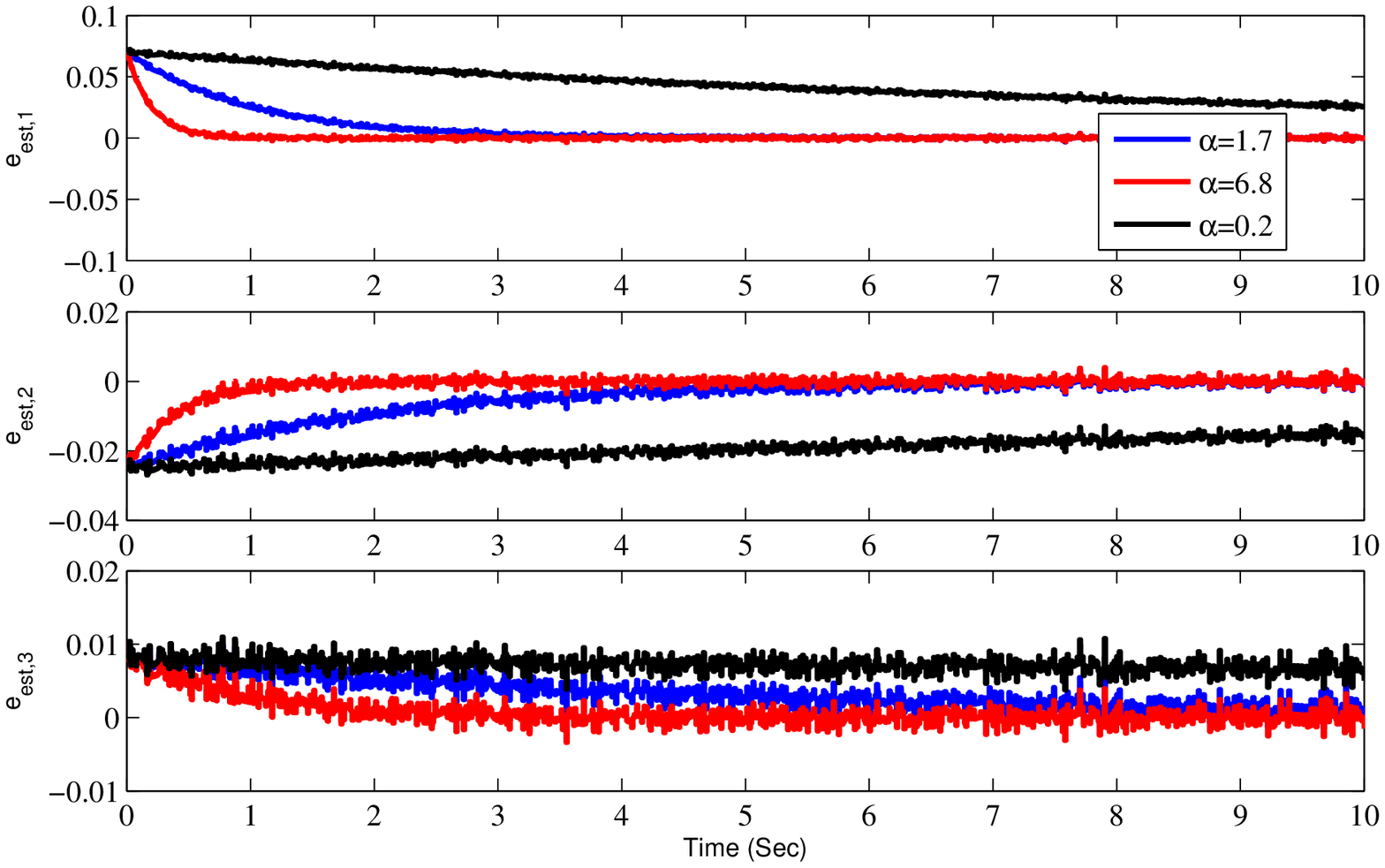}
  \caption{Investigating the effect of $\alpha$ in tracking error.}
    \label{fig:alpha1}
  \end{minipage}
  \hfill
  \begin{minipage}[b]{0.48\textwidth}
 \includegraphics[width=\linewidth,trim={0.5cm  0cm 1cm 0cm },clip]{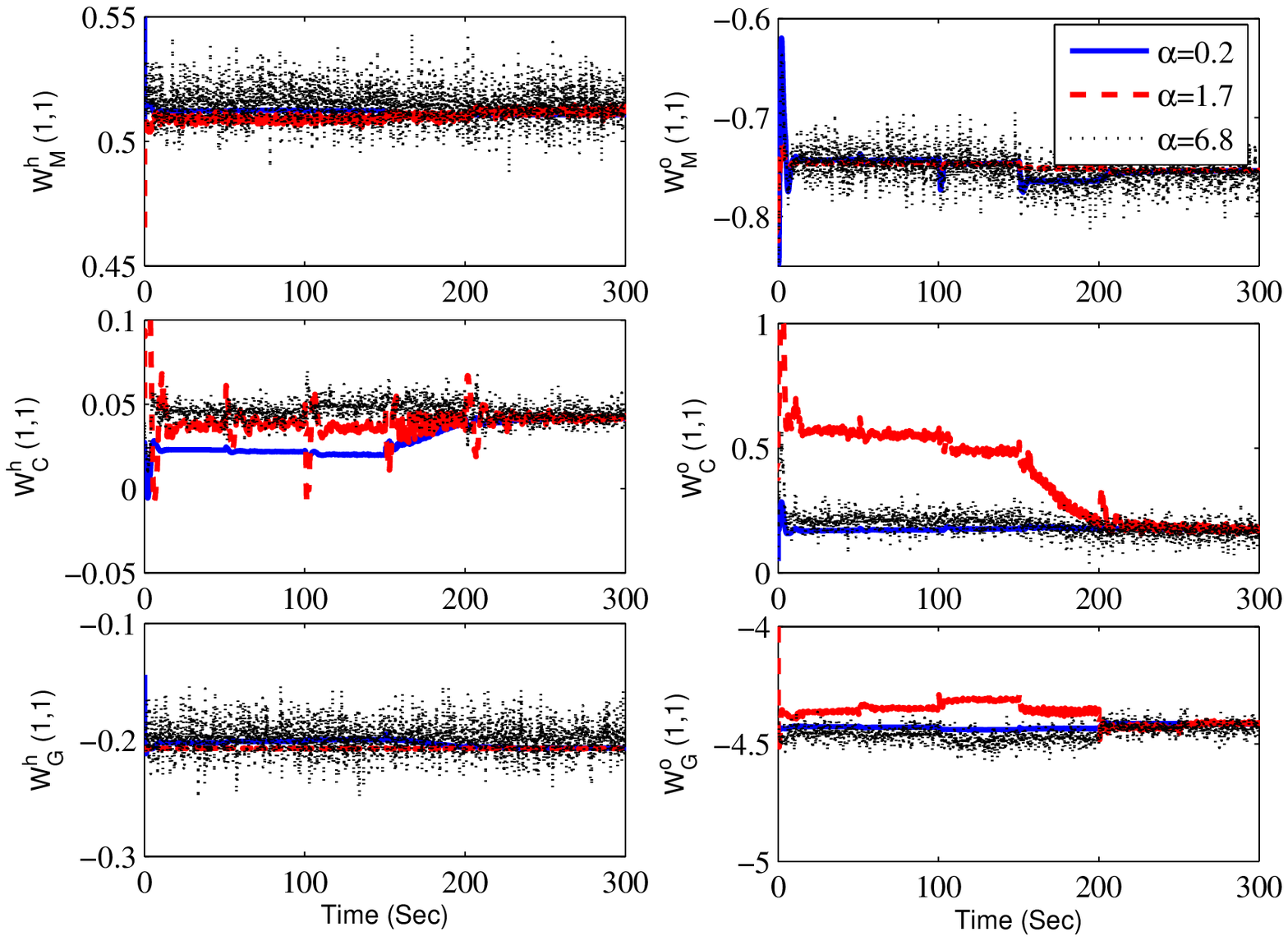}
\caption{Investigating the effect of $\alpha$ in weight adaptation.}
  \label{fig:weight}
  \end{minipage}
\end{figure}

\subsection{Validation}
For verification, an Inverse dynamic control (IDC) structure using the proposed neural network is utilized (NNIDC). According to Fig. \ref{fig:cont}, neural network tries to identify the dynamic terms individually online. In order to ameliorate undesirable response of adaptive structure, the weights are pre-tuned using batch data. The prescribe structure employ an indirect adaptive scheme to cancel nonlinearity presented in robot dynamic thanks to the feedback linearization. Then the proportional differential controller is used to track the reference signal. For the sake of comparison, two scenarios are suggest with/without NNIDC structure and the same PD controller.  The tracking of joint space motion variables in two scenarios are illustrated in Figs. \ref{fig:ver1}, \ref{fig:ver2} and \ref{fig:ver3} for each degrees of freedom. The priority of tracking performance in $q_2, q_3$ outweighs $q_1$ due to the compensation of affine terms of gravity. To have better insight about the interior of the prescribed structure, control effort of PD and NNIDC controller are depicted in Fig. \ref{fig:upi} and \ref{fig:uidc}, respectively. The effort signal in uncompensated structure is more chattering and larger in amplitude.

\begin{figure}[t!]
 \centering
 \includegraphics[width=\linewidth,trim={0.5cm  5cm 0cm 0.5cm },clip]{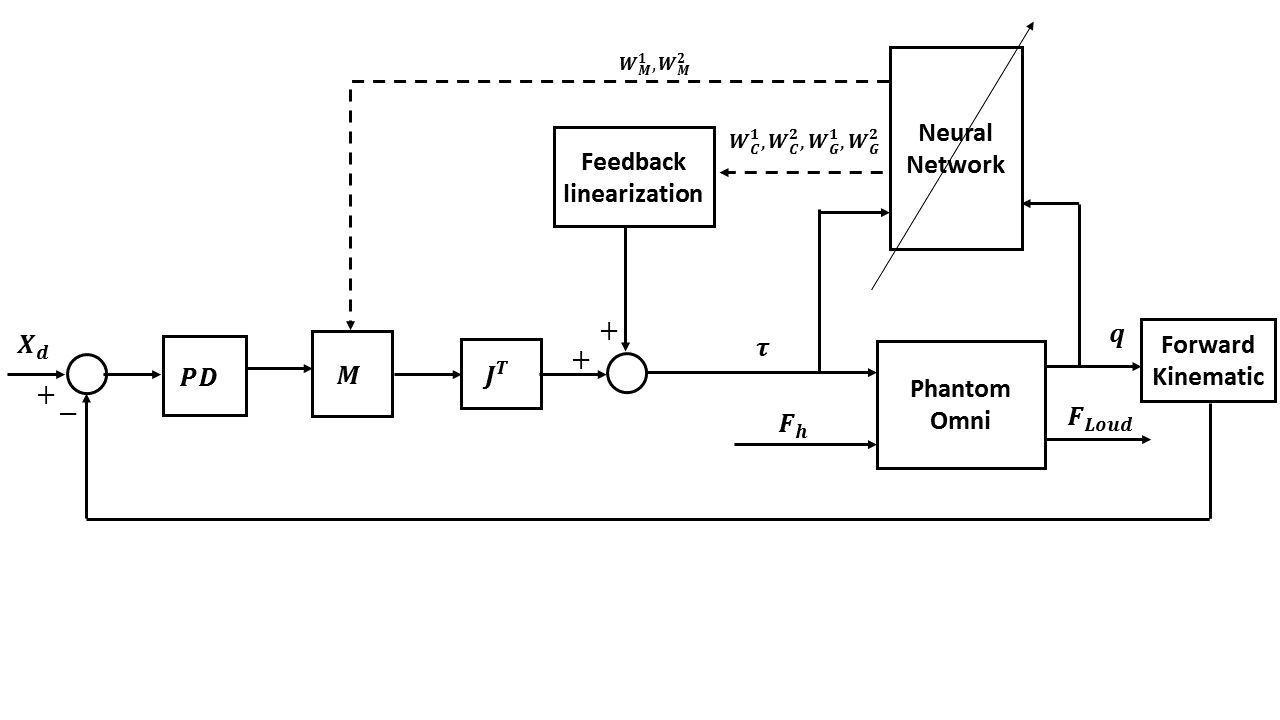}
  \caption{Indirect NN-based adaptive scheme for  robot control.}
  \label{fig:cont}
  \end{figure}
  
  \begin{figure}[t!]
  \centering
\includegraphics[width=\linewidth,trim={1cm  0.5cm 1.5cm 0.5cm },clip]{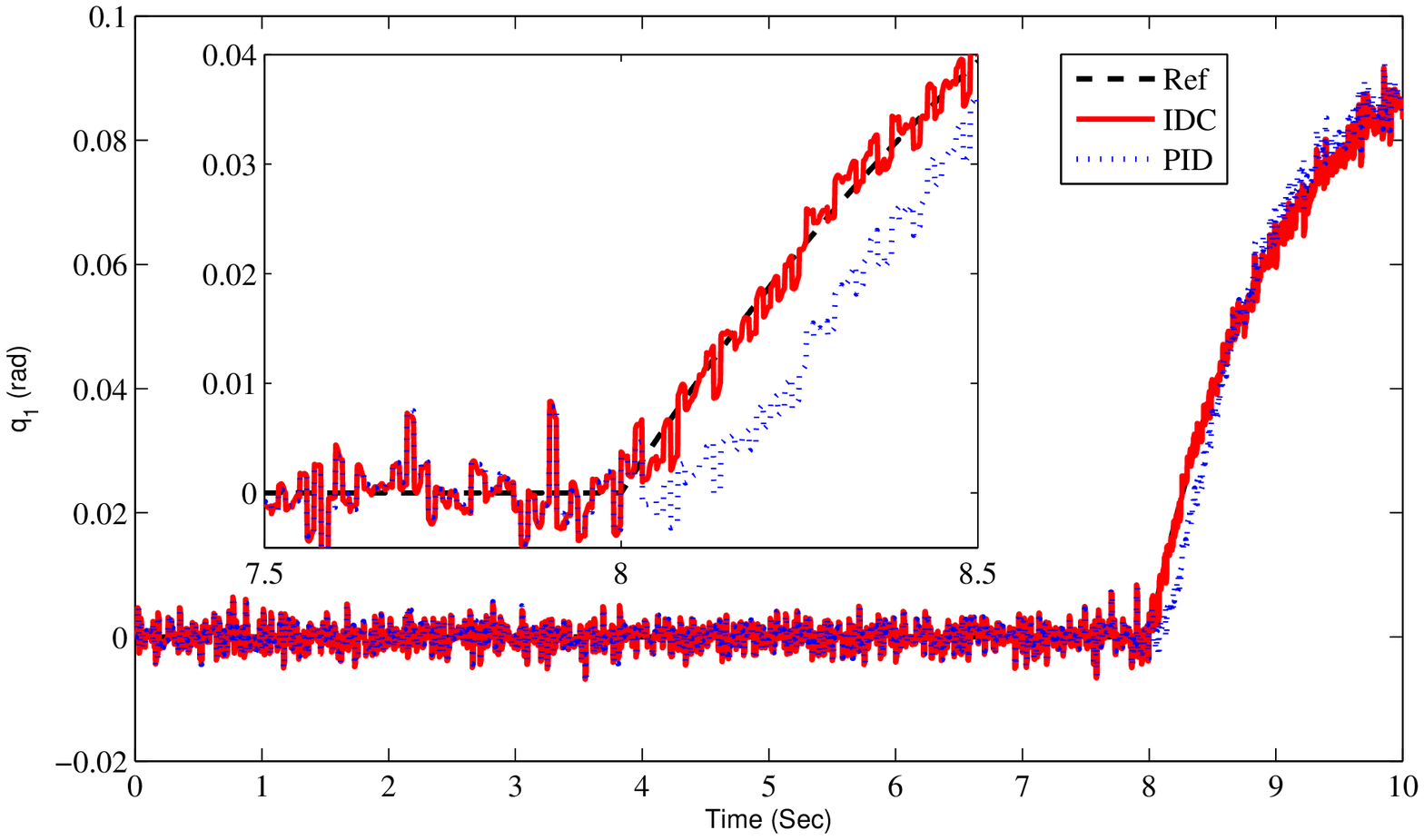}
  \caption{Comparison simple PD and NN IDC ($q_1$).}
  \label{fig:ver1}
  \end{figure}

\begin{figure}[t!]
  \centering
  \begin{minipage}[b]{0.48\textwidth}
\includegraphics[width=\linewidth,trim={0.5cm  0cm 1.5cm 0cm },clip]{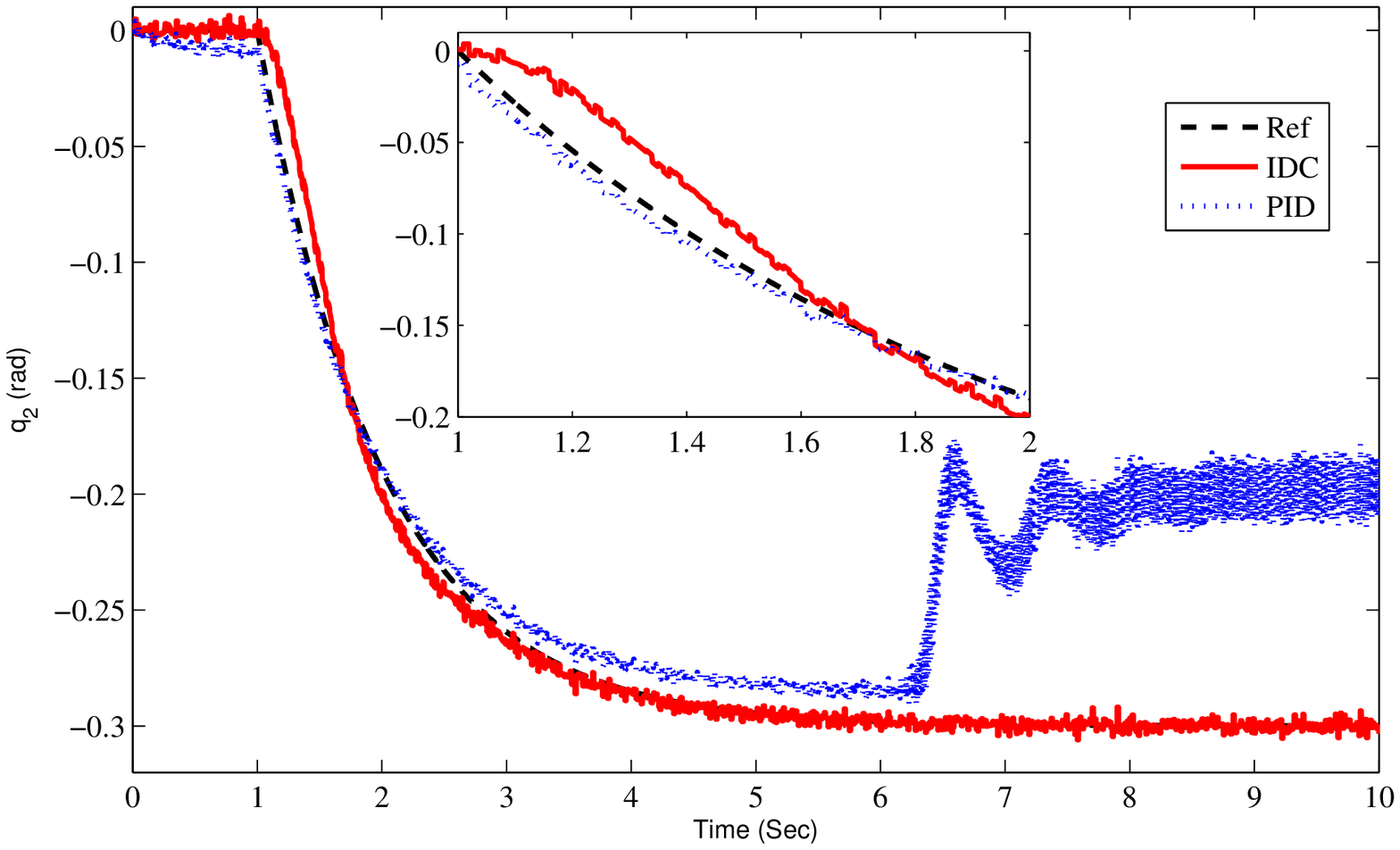}
  \caption{Comparison simple PD and NN IDC ($q_2$). }
  \label{fig:ver2}
  \end{minipage}
  \hfill
  \begin{minipage}[b]{0.48\textwidth}
 \includegraphics[width=\linewidth,trim={0.5cm  0cm 1.5cm 0cm },clip]{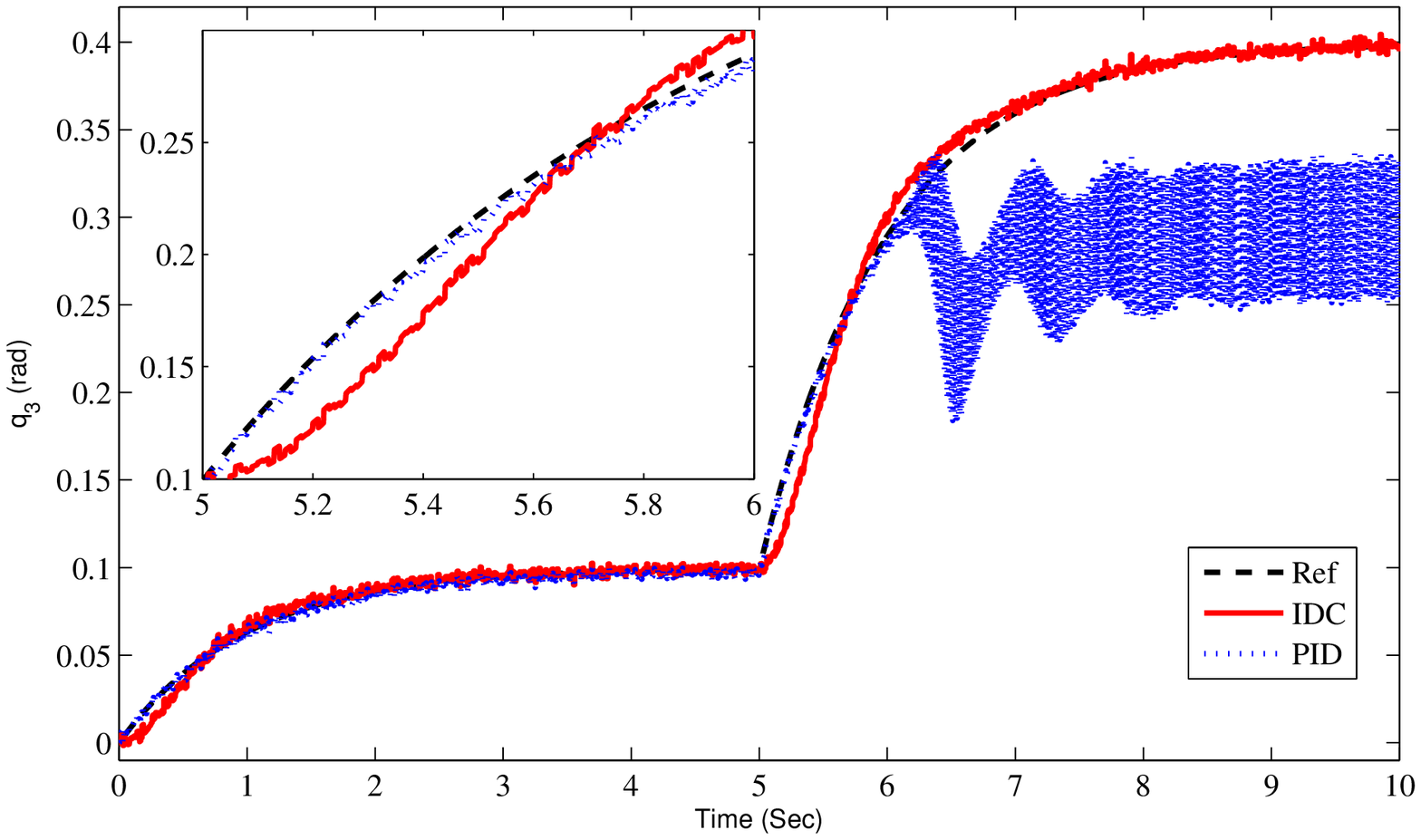}
  \caption{Comparison simple PD and NN IDC ($q_3$).}
  \label{fig:ver3}
  \end{minipage}
\end{figure}

\begin{figure}[t!]
  \centering
  \begin{minipage}[b]{0.48\textwidth}
 \includegraphics[width=\linewidth,trim={0.5cm  0cm 1cm 0cm },clip]{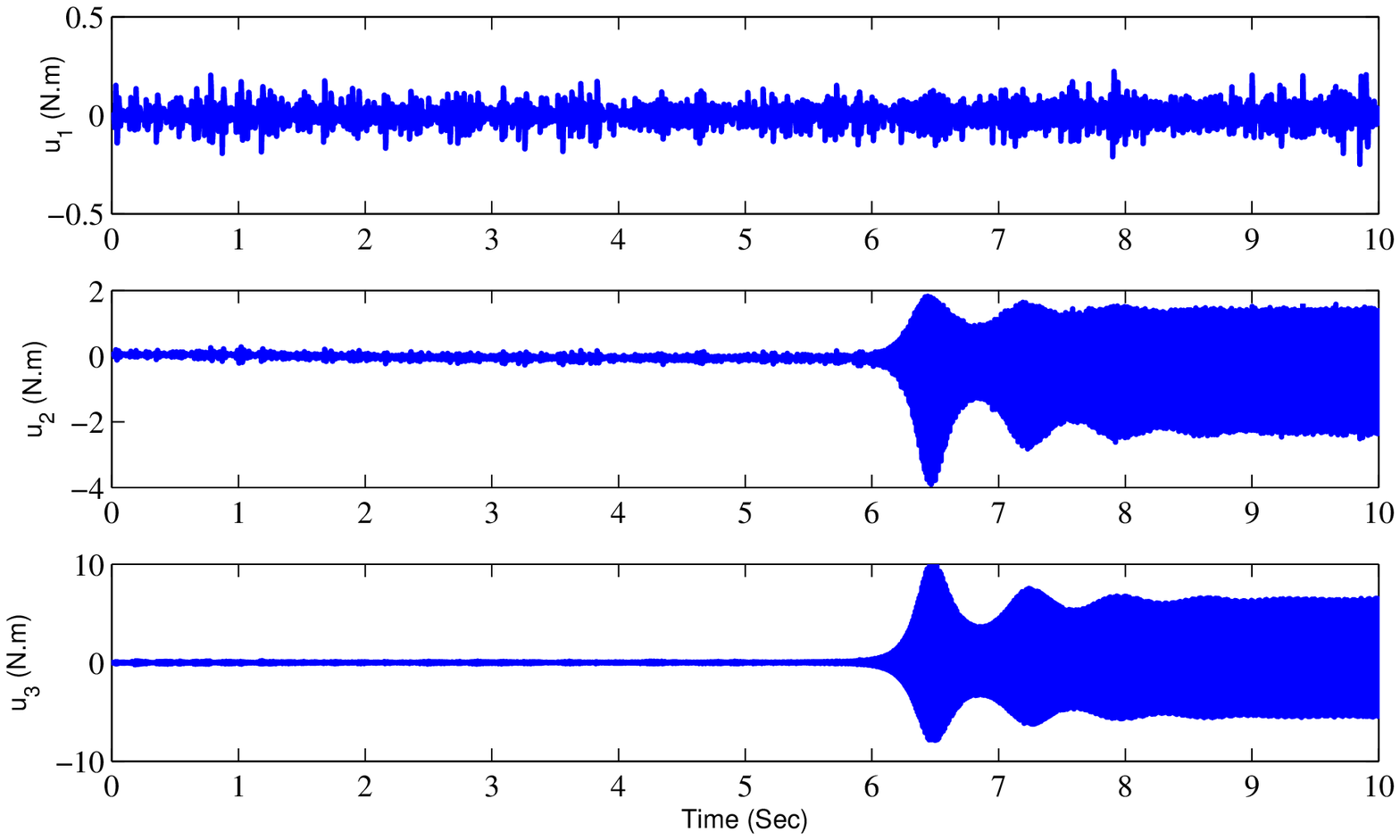}
  \caption{Control effort of PD controller.}
  \label{fig:upi}
  \end{minipage}
  \hfill
  \begin{minipage}[b]{0.48\textwidth}
 \includegraphics[width=\linewidth,trim={0.5cm  0.5cm 1cm 0cm },clip]{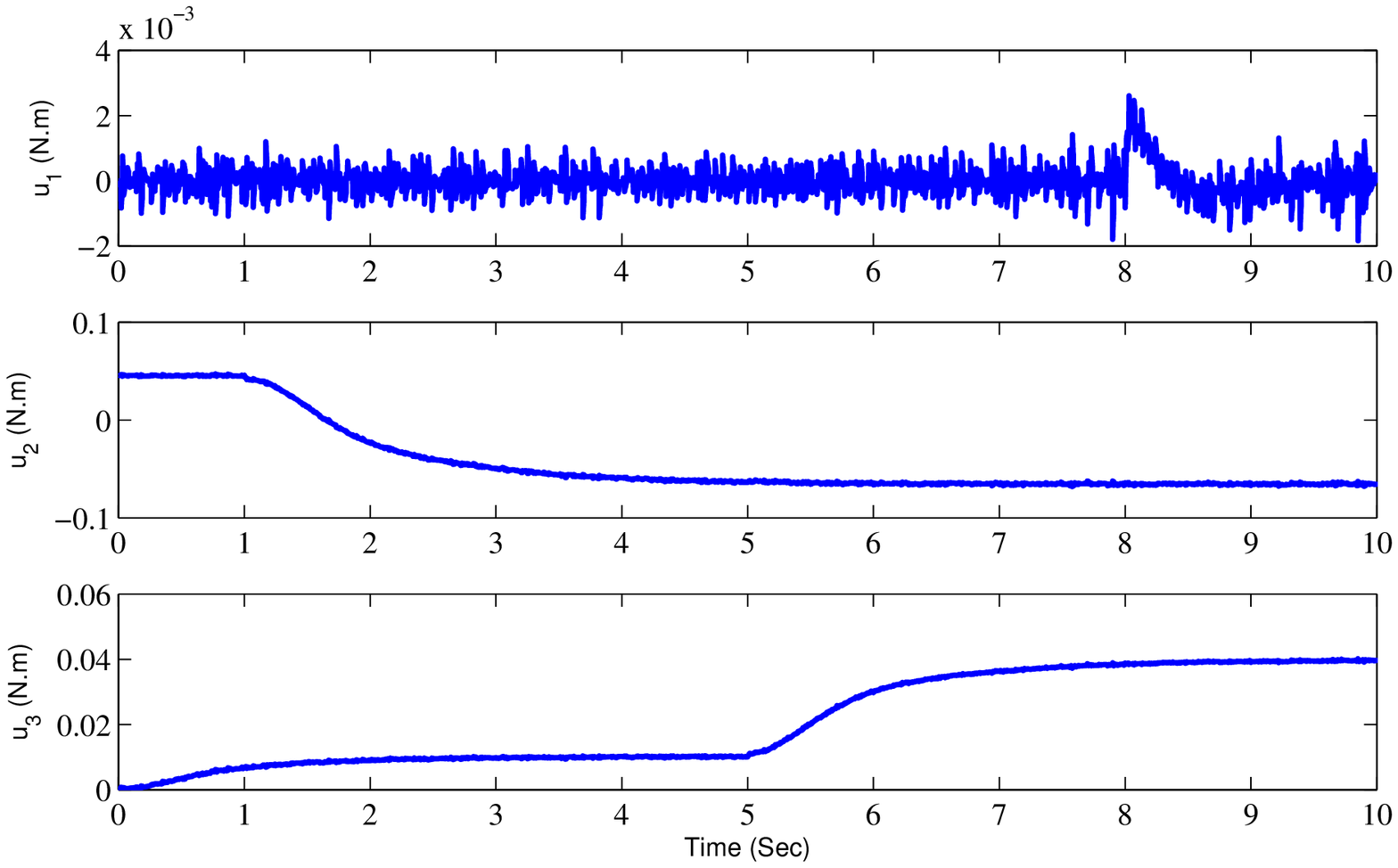}
  \caption{Control effort of NN-Based IDC controller.}
  \label{fig:uidc}
  \end{minipage}
\end{figure}

\section{Conclusion}
In this article, a learning algorithm is driven to identify robot dynamic terms individually with UUB stability. The proposed adaptive structure has the capability to be used in a great class of identification process.  By proposing two tuning parameters, one should compromise between  rate of convergence and stability.  Experimental setup on 3-DOF  Phantom Omni reveal applicability of the proposed structure and learning algorithm.

\addtolength{\textheight}{-3cm}  
\bibliographystyle{IEEEtran}
\bibliography{mybib}

\end{document}